\newtheorem{assumption}{Assumption}
\title{Exploration via Hindsight Goal Generation}
\author{
    Zhizhou Ren\thanks{Work done while Zhizhou and Kefan were visiting students at UIUC.} , Kefan Dong\footnotemark[2] \\
    Institute for Interdisciplinary Information Sciences, Tsinghua University\\
    Department of Computer Science, University of Illinois at Urbana-Champaign \\
    \texttt{\{rzz16, dkf16\}@mails.tsinghua.edu.cn} \\
    \And
    Yuan Zhou \\
    Department of Industrial and Enterprise Systems Engineering \\
    University of Illinois at Urbana-Champaign \\
    \texttt{yuanz@illinois.edu} \\
    \And
    Qiang Liu \\
    Department of Computer Science \\
    University of Texas at Austin \\
    \texttt{lqiang@cs.utexas.edu} \\
    \And
    Jian Peng \\
    Department of Computer Science \\
    University of Illinois at Urbana-Champaign \\
    \texttt{jianpeng@illinois.edu} \\
}
\begin{document}

\maketitle

\begin{abstract}
    Goal-oriented reinforcement learning has recently been a practical framework for robotic manipulation tasks, in which an agent is required to reach a certain goal defined by a function on the state space. However, the sparsity of such reward definition makes traditional reinforcement learning algorithms very inefficient. Hindsight Experience Replay (HER), a recent advance, has greatly improved sample efficiency and practical applicability for such problems. It exploits previous replays by constructing imaginary goals in a simple heuristic way, acting like an implicit curriculum to alleviate the challenge of sparse reward signal. In this paper, we introduce Hindsight Goal Generation (HGG), a novel algorithmic framework that generates valuable hindsight goals which are easy for an agent to achieve in the short term and are also potential for guiding the agent to reach the actual goal in the long term.
    We have extensively evaluated our goal generation algorithm on a number of robotic manipulation tasks and demonstrated substantially improvement over the original HER in terms of sample efficiency.
\end{abstract}

\section{Introduction}

Recent advances in deep reinforcement learning (RL), including policy gradient methods \citep{Schulman2015, Schulman2017ProximalPO} and Q-learning \citep{mnih2015human}, have demonstrated a large number of successful applications in solving hard sequential decision problems, including robotics \citep{Levine2016}, games \citep{Silver16, mnih2015human}, and recommendation systems \citep{Karatzoglou2013}, among others. To train a well-behaved policy, deep reinforcement learning algorithms use neural networks as functional approximators to learn a state-action value function or a policy distribution to optimize a long-term expected return. The convergence of the training process, particularly in Q-learning, is heavily dependent on the temporal pattern of the reward function \citep{Szepesvari1998}. For example, if only a non-zero reward/return is provided at the end of an rollout of a trajectory with length $L$, while no rewards are observed before the $L$-th time step, the Bellman updates of the Q-function would become very inefficient, requiring at least $L$ steps to propagate the final return to the Q-function of all earlier state-action pairs. Such sparse or episodic reward signals are ubiquitous in many real-world problems, including complex games and robotic manipulation tasks \citep{andrychowicz2017hindsight}. Therefore, despite its notable success, the application of RL is still quite limited to real-world problems, where the reward functions can be sparse and very hard to engineer \citep{Ng1999}. In practice, human experts need to design reward functions which would reflect the task needed to be solved and also be carefully shaped in a dense way for the optimization in RL algorithms to ensure good performance. However, the design of such dense reward functions is non-trivial in most real-world problems with sparse rewards. For example, in goal-oriented robotics tasks, an agent is required to reach some state satisfying predefined conditions or within a state set of interest. Many previous efforts have shown that the sparse indicator rewards, instead of the engineered dense rewards, often provide better practical performance when trained with deep Q-learning and policy optimization algorithms \citep{ andrychowicz2017hindsight}. In this paper, we will focus on improving training and exploration for goal-oriented RL problems. 

A notable advance is called {\it Hindsight Experience Replay (HER)} \citep{andrychowicz2017hindsight}, which greatly improves the practical success of off-policy deep Q-learning for goal-oriented RL problems, including several difficult robotic manipulation tasks. The key idea of HER is to revisit previous states in the experience replay and construct a number of achieved hindsight goals based on these visited intermediate states. Then the hindsight goals and the related trajectories are used to train an universal value function parameterized by a goal input by algorithms such as deep deterministic policy gradient (DDPG, \citet{lillicrap2016continuous}). A good way to think of the success of HER is to view HER as an implicit curriculum which first learns with the intermediate goals that are easy to achieve using current value function and then later with the more difficult goals that are closer to the final goal. A notable difference between HER and curriculum learning is that HER does not require an explicit distribution of the initial environment states, which appears to be more applicable to many real problems. 

In this paper, we study the problem of automatically generating valuable hindsight goals which are more effective for exploration. Different from the random curriculum heuristics used in the original HER, where a goal is drawn as an achieved state in a random trajectory, we propose a new approach that finds intermediate goals that are easy to achieve in the short term and also would likely lead to reach the final goal in the long term. To do so, we first approximate the value function of the actual goal distribution by a lower bound that decomposes into two terms, a value function based on a hindsight goal distribution and the Wasserstein distance between the two distributions.  Then, we introduce an efficient discrete Wasserstein Barycenter solver to generate a set of hindsight goals that optimizes the lower bound. Finally, such goals are used for exploration.

In the experiments, we evaluate our Hindsight Goal Generation approach on a broad set of robotic manipulation tasks. By incorporating the hindsight goals, a significant improvement on sample efficiency is demonstrated over DDPG+HER. Ablation studies show that our exploration strategy is robust across a wide set of hyper-parameters.

\section{Background}


\textbf{Reinforcement Learning}
    The goal of reinforcement learning agent is to interact with a given environment and maximize its expected cumulative reward. The environment is usually modeled by a Markov Decision Process (MDP), given by tuples $\left<\mathcal{S}, \mathcal{A}, P, R, \gamma\right>,$ where $\mathcal{S}, \mathcal{A}$ represent the set of states and actions respectively. $P:\mathcal{S}\times\mathcal{A}\to \mathcal{S}$ is the transition function and $R:\mathcal{S}\times \mathcal{A}\to [0,1]$ is the reward function. $\gamma$ is the discount factor. The agent trys to find a policy $\pi:\mathcal{S}\to \mathcal{A}$ that maximizes its expected curriculum reward $V^{\pi}(s_0)$,  where $s_0=s$ is usually given or drawn from a distribution $\mu_0$  of initial state. The value function $V^{\pi}(s)$ is defined as
    \begin{align*}
        V^{\pi}(s)=\mathbb{E}_{s_0=s, a_t\sim \pi(\cdot\mid s_t), s_{t+1}\sim P(\cdot\mid s_t,a_t)}\left[\sum_{t=0}^{\infty}\gamma^tR(s_t,a_t)
        \right].
    \end{align*}

\textbf{Goal-oriented MDP}
    In this paper, we consider a specific class of MDP called goal-oriented MDP. 
    We use $\mathcal{G}$ to denote the set of goals.
    Different from traditional MDP, the reward function $R$ is a goal-conditioned sparse and binary signal indicating whether the goal is achieved: 
    \begin{align}\label{reward_setting}
        R_g(s_t,a_t,s_{t+1})&:=\left\{
            \begin{array}{cc}
                0, & \|\phi(s_{t+1})-g\|_2\leq\delta_g \\
                -1, & \text{otherwise}.
            \end{array}
        \right.
    \end{align}
    $\phi:\mathcal{S}\rightarrow\mathcal{G}$ is a known and tractable mapping that defines goal representation. $\delta_g$ is a given threshold indicating whether the goal is considered to be reached (see \cite{plappert2018multi}).
    
\textbf{Universal value function}
    The idea of universal value function is to use a single functional approximator, such as neural networks, to represent a large number of value functions. For the goal-oriented MDPs, the goal-based value function of a policy $\pi$ for any given goal $g$ is defined as $V^{\pi}(s,g)$, for all state $s\in \mathcal{S}.$ That is 
    \begin{align}
        V^\pi(s,g):=\mathbb{E}_{s_0=s, a_t\sim \pi(\cdot\mid s_t, g), s_{t+1}\sim P(\cdot\mid s_t,a_t)}\left[\sum_{t=0}^\infty\gamma^tR_g(s_t,a_t,s_{t+1})\right].\label{value_function0}
    \end{align}
    
    Let $\mathcal{T}^*:\mathcal{S}\times\mathcal{G}\to [0,1]$ be the joint distribution over starting state $s_0\in \mathcal{S}$ and goal $g\in \mathcal{G}.$. That is, at the start of every episode, a state-goal pair $(s_0,g)$ will be drawn from the task distribution $\mathcal{T}^*$. The agent tries to find a policy $\pi:\mathcal{S}\times \mathcal{G}\to \mathcal{A}$ that maximizes the expectation of discounted cumulative reward
    \begin{align}
        V^\pi(\mathcal{T}^*):=\mathop{\mathbb{E}}_{(s_0,g)\sim\mathcal{T}^*}[V^\pi(s_0,g)]\label{value_function}
    \end{align}

    Goal-oriented MDP characterizes several reinforcement benchmark tasks, such as the robotics tasks in the OpenAI gym environment \citep{plappert2018multi}. For example, in the FetchPush (see Figure \ref{show_her}) task, the agent needs to learn pushing a box to a designated point. In this task, the state of the system $s$ contains the status for both the robot and the box. The goal $g$, on the other hand, only indicates the designated position of the box. Thus, the mapping $\phi$ is defined as a mapping from a system state $s$ to the position of the box in $s$.

\textbf{Access to Simulator} 
    One of the common assumption made by previous work is an universal simulator that allows the environment to be reset to any given state \citep{florensa2017reverse,ecoffet2019go}. This kind of simulator is excessively powerful, and hard to build when acting in the real world. On the contrary, our method does not require an universal simulator, and thus is more realizable. 
\section{Related Work}
\textbf{Multi-Goal RL}
     The role of goal-conditioned policy has been investigated widely in deep reinforcement learning scenarios \citep{pong2019skew}. A few examples include grasping skills in imitation learning \citep{pathak2018zero,srinivas2018universal}, disentangling task knowledge from environment \citep{mao2018universal,ghosh2018learning}, and constituting lower-level controller in hierarchical RL \citep{oh2017zero,nachum2018data,huang2019mapping,eysenbach2019search}. By learning a universal value function which parameterizes the goal using a function approximator \citep{schaul2015universal}, an agent is able to learn multiple tasks simultaneously \citep{kaelbling1993learning,veeriah2018many} and identify important decision states \citep{goyal2019infobot}. It is shown that multi-task learning with goal-conditioned policy improves the generalizability to unseen goals (e.g., \cite{schaul2015universal}). 
     
\textbf{Hindsight Experience Replay}
    Hindsight Experience Replay \citep{andrychowicz2017hindsight} is an effective experience replay strategy which generates reward signal from failure trajectories. The idea of hindsight experience replay can be extended to various goal-conditioned problems, such as hierarchical RL \citep{levy2019learning}, dynamic goal pursuit \citep{fang2019dher}, goal-conditioned imitation \citep{ding2019goal,sun2019policy} and visual robotics applications \citep{nair2018visual,sahni2019addressing}. It is also shown that hindsight experience replay can be combined with on-policy reinforcement learning algorithms by importance sampling \citep{rauber2019hindsight}.
    
\textbf{Curriculum Learning in RL}
    Curriculum learning in RL usually suggests using a sequence of auxiliary tasks to guide policy optimization, which is also related to multi-task learning, lifelong learning, and transfer learning. The research interest in automatic curriculum design has seen rapid growth recently, where approaches have been proposed to schedule a given set of auxiliary tasks \citep{riedmiller2018learning,colas2019curious}, and to provide intrinsic motivation \citep{forestier2017intrinsically,pere2018unsupervised,sukhbaatar2018intrinsic,colas2018gep}. Generating goals which leads to high-value states could substantially improve the sample efficiency of RL agent \citep{goyal2018recall}. Guided exploration through curriculum generation is also an active research topic, where either the initial state \citep{florensa2017reverse} or the goal position \citep{baranes2013active,florensa2018automatic} is considered as a manipulable factor to generate the intermediate tasks. However, most curriculum learning methods are domain-specific, and it is still open to build  a generalized framework for curriculum learning.
\section{Automatic Hindsight Goal Generation}
\begin{wrapfigure}[12]{R}{4.5cm}
    \centering \includegraphics[height=3.0cm]{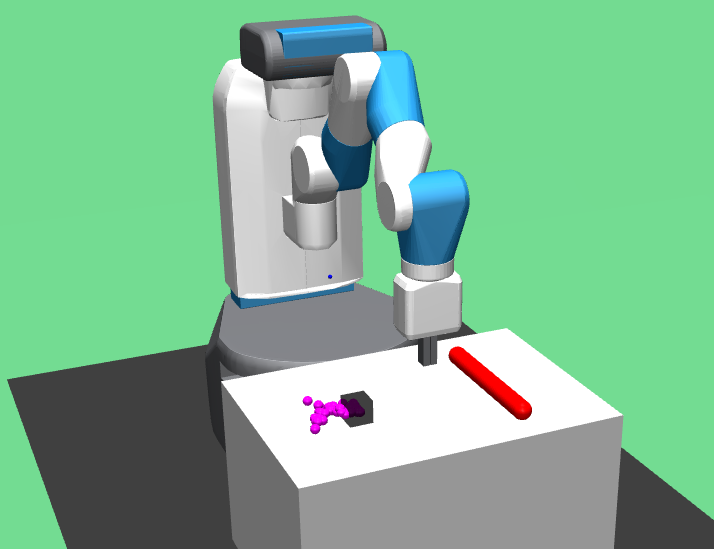}
   \caption{Visualization of hindsight goals (pink particles).}
    \label{show_her}
\end{wrapfigure}
As discussed in the previous section, HER provides an effective solution to resolve the sparse reward challenge in object manipulation tasks, in which achieved state in some past trajectories will be replayed as  imaginary goals. In the other words, HER modifies the task distribution in replay buffer to generate a set of auxiliary nearby goals which can used for further exploration and improve the performance of an off-policy RL agent which is expected to reach a very distant goal. However, the distribution of hindsight goals where the policy is trained on might differ significantly from the original task or goal distribution. Take Figure \ref{show_her} as an example, the desired goal distribution is lying on the red segment, which is far away from the initial position. In this situation, those hindsight goals may not be effective enough to promote policy optimization in original task. The goal of our work is to develop a new approach to generate valuable hindsight goals that will improve the performance on the original task.

In the rest of this section, we will present a new algorithmic framework as well as our implementation for automatic hindsight goal generation for better exploration.

\subsection{Algorithmic Framework}\label{sec:alg}

    Following \cite{florensa2018automatic}, our approach relies on the following generalizability assumption.     
    
    \begin{assumption}\label{generalizability_assumption}
        A value function of a policy $\pi$ for a specific goal $g$ has some generalizability to another goal $g'$ close to $g$.
    \end{assumption}
    
    One possible mathematical characterization for Assumption \ref{generalizability_assumption} is via the Lipschitz continuity. Similar assumptions have been widely applied in many scenarios \citep{asadi2018lipschitz,luo2019algorithmic}:
    \begin{equation}
        \left|V^{\pi}(s,g)-V^{\pi}(s',g')\right|\le L\cdot d((s,g),(s',g')), \label{discrepancy_bound}
    \end{equation}
    where $d((s,g),(s',g'))$ is a metric defined by
    \begin{align}\label{state_abstraction}
        d((s,g),(s',g'))=c\|\phi(s)-\phi(s')\|_2+\|g-g'\|_2.
    \end{align}
    for some hyperparameter $c>0$ that provides a trade-off between the distances between initial states and the distance between final goals. $\phi(\cdot)$ is a state abstraction to map from the state space to the goal space. When experimenting with the tasks in the OpenAI Gym environment \citep{plappert2018multi}, we simply adopt the state-goal mappings as defined in \eqref{reward_setting}.
    Although the Lipschitz continuity may not hold for every $s,s'\in \mathcal{S}, g,g'\in \mathcal{G}, $ we only require continuity over some specific region. It is reasonable to claim that bound Eq.~\eqref{discrepancy_bound} holds for most of the $(s,g),(s',g')$ when $d((s,g),(s',g'))$ is not too large.

    Partly due to the reward sparsity of the distant goals, optimizing the expected cumulative reward (see Eq.~\eqref{value_function}) from scratch is very difficult. Instead, we propose to optimize a relaxed lower bound which introduces intermediate goals that may be  easier to optimize. Here we provide Theorem \ref{lem:lower_bound} that establishes the such a lower bound.
    
    \begin{restatable}{theorem}{thmbound}\label{lem:lower_bound}
        Assuming that the generalizability condition (Eq.~\eqref{discrepancy_bound}) 
        holds for two distributions $(s,g)\sim \mathcal{T}$ and $(s',g')\sim \mathcal{T}'$, we have
        \begin{equation}\label{equ:wasserstein_lowerbound}
            V^\pi(\mathcal{T}')\ge V^\pi(\mathcal{T})-L\cdot D(\mathcal{T},\mathcal{T}'). 
        \end{equation}
        where $D(\cdot,\cdot)$ is the Wasserstein distance based on $d(\cdot,\cdot)$
        \begin{align*}
            D(\mathcal{T}^{(1)},\mathcal{T}^{(2)})=&\inf_{\mu\in\Gamma(\mathcal{T}^{(1)},\mathcal{T}^{(2)})} \left({\mathbb{E}}_{\mu}[d(({s_0}^{(1)},g^{(1)}),({s_0}^{(2)},g^{(2)}))]\right)
        \end{align*}
        where $\Gamma(\mathcal{T}^{(1)},\mathcal{T}^{(2)})$ denotes the collection of all joint distribution $\mu({s_0}^{(1)},g^{(1)},{s_0}^{(2)},g^{(2)})$ whose marginal probabilities are  $\mathcal{T}^{(1)},\mathcal{T}^{(2)}$, respectively.
    \end{restatable}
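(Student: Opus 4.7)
The plan is to derive the bound directly from the Kantorovich formulation of the Wasserstein distance combined with the Lipschitz assumption \eqref{discrepancy_bound}. The key observation is that for any coupling $\mu \in \Gamma(\mathcal{T},\mathcal{T}')$, the marginal property lets us rewrite each of $V^\pi(\mathcal{T})$ and $V^\pi(\mathcal{T}')$ as an expectation under the same joint distribution $\mu$, after which the two terms can be compared pointwise.

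First, I would fix an arbitrary coupling $\mu \in \Gamma(\mathcal{T},\mathcal{T}')$ and, using the definition \eqref{value_function} together with the fact that the marginals of $\mu$ coincide with $\mathcal{T}$ and $\mathcal{T}'$, write
\begin{equation*}
    V^\pi(\mathcal{T}) - V^\pi(\mathcal{T}') \;=\; \mathbb{E}_{((s,g),(s',g'))\sim \mu}\bigl[V^\pi(s,g) - V^\pi(s',g')\bigr].
\end{equation*}
Next I would bound the integrand by its absolute value and apply the Lipschitz hypothesis \eqref{discrepancy_bound} pointwise to obtain
\begin{equation*}
    V^\pi(\mathcal{T}) - V^\pi(\mathcal{T}') \;\le\; L \cdot \mathbb{E}_{\mu}\bigl[d((s,g),(s',g'))\bigr].
\end{equation*}
Finally, since this inequality holds for every coupling $\mu$, I would take the infimum over $\mu \in \Gamma(\mathcal{T},\mathcal{T}')$, recognize the right-hand side as $L\cdot D(\mathcal{T},\mathcal{T}')$, and rearrange to conclude \eqref{equ:wasserstein_lowerbound}.

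There is no serious obstacle here; the argument is essentially the standard proof that integration against a $L$-Lipschitz function is $L$-Lipschitz in the Wasserstein-$1$ metric. The only small subtlety is that the Lipschitz bound \eqref{discrepancy_bound} was only assumed to hold over a restricted region of $(s,g),(s',g')$, so strictly speaking the argument requires the support of the chosen coupling (in particular, the optimal one) to lie within the region of validity. In the statement of the theorem this is glossed over by assuming the generalizability condition holds for the two distributions in question, so once that caveat is noted the bound follows immediately from the three displayed lines above.
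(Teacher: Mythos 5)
Your argument is correct and follows essentially the same route as the paper's proof: fix an arbitrary coupling $\mu \in \Gamma(\mathcal{T},\mathcal{T}')$, use the marginal property to express both value functions as expectations under $\mu$, apply the pointwise Lipschitz bound \eqref{discrepancy_bound}, and take the infimum over couplings to recover $D(\mathcal{T},\mathcal{T}')$. Your closing remark about the restricted region of validity of the Lipschitz assumption is a fair observation that the paper also glosses over by simply assuming the condition holds for the two distributions.
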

    The proof of Theorem 1 is deferred to Appendix \ref{app:proof}.
    
    It follows from Theorem 1 that optimizing cumulative rewards Eq.~\eqref{value_function} can be relaxed into the following surrogate problem
    \begin{align}\label{optimization}
        \max_{\mathcal{T}, \pi} \quad V^\pi(\mathcal{T})-L\cdot D(\mathcal{T},\mathcal{T}^*).
    \end{align}
    
    Note that this new objective function is very intuitive. Instead of optimizing with the difficult goal/task distribution $\mathcal{T}^*$, we hope to find a collection of surrogate goals $\mathcal{T}$, which are both easy to optimize and are also close or converging towards $\mathcal{T}^*$. 
    However the joint optimization of $\pi$ and $\mathcal{T}$ is non-trivial. This is because a) $\mathcal{T}$ is a high-dimensional distribution over tasks, b) policy $\pi$ is optimized with respect to a shifting task distribution $\mathcal{T}$, c) the estimation of value function $V^\pi(\mathcal{T})$ may not be quite accurate during training.
    
    Inspired by \citet{andrychowicz2017hindsight}, we adopt the idea of using hindsight goals here. We first enforce $\mathcal{T}$ to be a finite set of $K$ particles which can only be from those already achieved states/goals from the replay buffer $B$. 
    In another word, the support of the set $\mathcal{T}$ should lie inside $B$.
    In the meanwhile, we notice that a direct implementation of problem Eq.~\eqref{optimization} may lead to degeneration of hindsight goal selection of the training process, i.e., the goals may be all drawn from a single trajectory, thus not being able to provide sufficient exploration. 
    Therefore, we introduce an extra diversity constraint, i.e, for every trajectory $\tau\in B$, at most $\mu$ states can be selected in $\mathcal{T}$. In practice, we find that simply setting it to 1 would result in reasonable performance. It is shown in Section \ref{sec:abliation} that this diversity constraint indeed improves the robustness of our algorithm.
    
    Finally, the optimization problem we aim to solve is,
    \begin{align*}
     \max_{\pi, \mathcal{T}:|\mathcal{T}|=K} \quad &{V}^\pi(\mathcal{T})-L\cdot D(\mathcal{T},\mathcal{T}^*)\\
        \text{s.t. } \quad 
        &\sum_{s_0,s_t\in \tau} \mathbbm{1}[(s_0,\phi(s_t)) \in \mathcal{T}]\le 1, \quad\forall \tau\in B \\
        &\sum_{\tau \in B}\sum_{s_0,s_t\in \tau} \mathbbm{1}[(s_0,\phi(s_t)) \in \mathcal{T}] = K.
    \end{align*}
    
    To solve the above optimization, we adapt a two-stage iterative algorithm. First, we apply a policy optimization algorithm, for example DDPG, to maximize the value function conditioned on the task set $\mathcal{T}$. Then we fix $\pi$ and optimize the the hindsight set $\mathcal{T}$ subject to the diversity constraint, which is a variant of the well-known Wasserstein Barycenter problem with a bias term (the value function) for each particle. Then we iterate the above process until the policy achieves a desirable performance or we reach a computation budget. It is not hard to see that the first optimization of value function  is straightforward. In our work, we simply use the DDPG+HER framework for it. The second optimization of hindsight goals is non-trivial. In the following, we describe an efficient approximation algorithm for it.

\subsection{Solving  Wasserstein Barycenter Problem via Bipartite Matching}

    Since we assume that $\mathcal{T}$ is hindsight and with $K$ particles, we can approximately solve the above Wasserstein Barycenter problem in the combinatorial setting as a bipartite matching problem. Instead of dealing with $\mathcal{T}^*$,  we draw $K$ samples from $\mathcal{T}^*$ to empirically approximate it by a set of $K$ particles $\widehat{\mathcal{T}}^*$. 
    In this way, the hindsight task set $\mathcal{T}$ can be solved in the following way. For every task instance $(\hat s^i_0,\hat g^i)\in\widehat{\mathcal{T}}^*$, we find a state trajectory $\tau^i=\{s^i_t\}\in B$ that together minimizes the sum
    \begin{align}
     \sum_{(\hat s^i_0,\hat g^i)\in\widehat{\mathcal{T}}^*}w((\hat s^i_0,\hat g^i),\tau^i) \label{equ:distance_to_trajectory}
    \end{align}
    where we define
    \begin{align}\label{equ:def-w}
        w((\hat s^i_0,\hat g^i),\tau^i) := c\|\phi(\hat s^i_0)-\phi(s^i_0)\|_2 + \min_{t}\left(\|\hat g^i-\phi(s^i_t)\|_2-\frac{1}{L}V^\pi(s^i_0,\phi(s^i_t))\right).
    \end{align}
    Finally we select each corresponding achieved state $s_t\in\tau$ to construct hindsight goal $(\hat s_0,\phi(s_t))\in \mathcal{T}$. It is not hard to see that the above combinatorial optimization  exactly identifies optimal solution $\mathcal{T}$ in the above-mentioned  Wasserstein Barycenter problem. In practice, the Lipschitz constant $L$ is unknown and therefore treated as a hyper-parameter. 
    
    
    The optimal solution of the combinatorial problem in Eq.~\eqref{equ:distance_to_trajectory} can be solved efficiently by the well-known maximum weight bipartite matching \citep{munkres1957algorithms,duan2012scaling}. The bipartite graph $G(\{V_x, V_y\}, E)$ is constructed as follows. Vertices are split into two partitions $V_x, V_y$. Every vertex in $V_x$ represents a task instance $(\hat{s}_0, \hat{g})\in \hat{\mathcal{T}}^*$, and vertex in $V_y$ represents a  trajectory $\tau\in B$. The weight of edge connecting $(\hat{s}_0, \hat{g})$ and $\tau$ is $-w((\hat{s}_0, \hat{g}), \tau)$ as defined in Eq.~\eqref{equ:def-w}. 
    In this paper, we apply the Minimum Cost Maximum Flow algorithm to solve this bipartite matching problem (for example, see \cite{networkflow}). 
    

    \begin{algorithm}[htp]
    \caption{Exploration via Hindsight Goal Generation (HGG)}
    \label{main_alg}
        \begin{algorithmic}[1]
            \State Initialize $\pi$ \Comment{initialize neural networks}
            \State $B\gets\emptyset$
            \For{iteration $=1,2,\dots, N$}
                \State Sample $\{(\hat s_0^i,\hat g^i)\}_{i=1}^K\sim\mathcal{T}^*$ \Comment{sample from target distribution}
                \State Find $K$ distinct trajectories $\{\tau^i\}_{i=1}^K$ that minimize \Comment{weighted bipartite matching}
                \begin{align*}
                    \sum_{i=1}^K w((\hat s_0^i,\hat g^i),\tau^i) &= \sum_{i=1}^K \left(c\|\phi(\hat s_0^i)-\phi(s_0^i)\|_2 + \min_{t}\left(\|\hat g^i-\phi(s_t^i)\|_2-\frac{1}{L}V^{\pi}(s^i_0,\phi(s_t^i))\right)\right)
                \end{align*}
                \State Construct intermediate task distribution $\{(\hat s_0^i,g^i)\}_{i=1}^M$ where
                \begin{align*}
                    g^i &= \phi\left(\mathop{\arg\min}_{s_t^i\in  \tau_i}\left(\|\hat g^i-\phi(s_t^i)\|_2-\frac{1}{L}V^{\pi}(s^i_0,\phi(s_t^i))\right)\right)
                \end{align*}
                \For{$i=1,2,\dots, K$}
                    \State $(s_0,g)\gets(\hat s_0^i,g^i)$ \Comment{\textbf{critical step: hindsight goal-oriented exploration}} \label{alg_goal_generating}
                    \For{$t=0,1,\dots, H-1$}
                        \State $a_t\gets\pi(\cdot|s_t,g)+\text{noise}$ \Comment{together with $\epsilon$-greedy or Gaussian exploration}
                        \State $s_{t+1}\sim P(\cdot|s_t,a_t)$
                        \State $r_t\gets R_g(s_t,a_t,s_{t+1})$
                    \EndFor
                    \State $\tau\gets\{s_0,a_0,r_0,s_1,\dots\}$
                    \State $B\gets B\cup\{\tau\}$
                \EndFor
                \For{$i=1\dots M$}
                    \State Sample a minibatch $b$ from replay buffer using HER
                    \label{alg_replay_strategy}
                    \State Perform one step on value and policy update on minibatch $b$ using DDPG
                \EndFor
            \EndFor
        \end{algorithmic}
    \end{algorithm}
    
    \textbf{Overall Algorithm} The overall description of our algorithm is shown in Algorithm \ref{main_alg}.
    Note that our  exploration strategy the only modification is in Step \ref{alg_goal_generating}, in which we generate hindsight goals to guide the agent to collect more valuable trajectories. So it is complementary to other improvements in DDPG/HER around Step $\ref{alg_replay_strategy}$, such as the prioritized experience replay strategy \citep{schaul2016prioritized,zhao2018energy,zhao2019maximum} and other variants of hindsight experience replay \citep{fang2019curriculum,bai2019guided}.
\section{Experiments}

    
    Our experiment environments are based on the standard robotic manipulation environments in the OpenAI Gym \citep{openaigym}\footnote{Our code is available at \url{https://github.com/Stilwell-Git/Hindsight-Goal-Generation}.}. In addition to the standard settings, to better visualize the improvement of the sample efficiency, we vary the target task distributions in the following ways:
    \begin{itemize}
        \item Fetch environments: Initial object position and goal are generated uniformly at random from two distant segments.
        \item Hand-manipulation environments : These tasks require the agent to rotate the object into a given pose, and only the rotations around $z$-axis are considered here. We restrict the initial axis-angle in a small interval, and the target pose will be generated in its symmetry. That is, the object needs to be rotated in about $\pi$ degree. 
        \item Reach environment: FetchReach and HandReach do not support randomization of the initial state, so we restrict their target distribution to be a subset of the original goal space.
    \end{itemize}
    
    Regarding baseline comparison, we consider the original DDPG+HER algorithm. We also investigate the integration of the  experience replay prioritization strategies, such as the Energy-Based Prioritization (EBP) proposed by \citet{zhao2018energy}, which draws the prior knowledge of physics system to exploit valuable trajectories.
    More details of experiment settings are included in the Appendix \ref{experiment_settings}. 

\subsection{HGG Generates Better Hindsight Goals for Exploration}
    \begin{figure}[htp]
        \centering
        \begin{subfigure}[b]{0.1\textwidth}
            HGG
            \vspace{0.55in}
            \\
            HER
            \vspace{0.5in}
        \end{subfigure}
        \begin{subfigure}[b]{0.19\textwidth}
            \includegraphics[width=\textwidth]{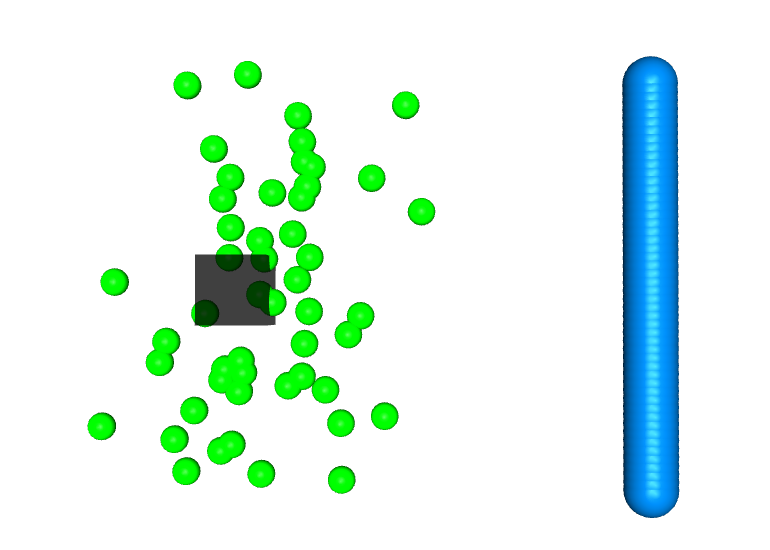}
            \includegraphics[width=\textwidth]{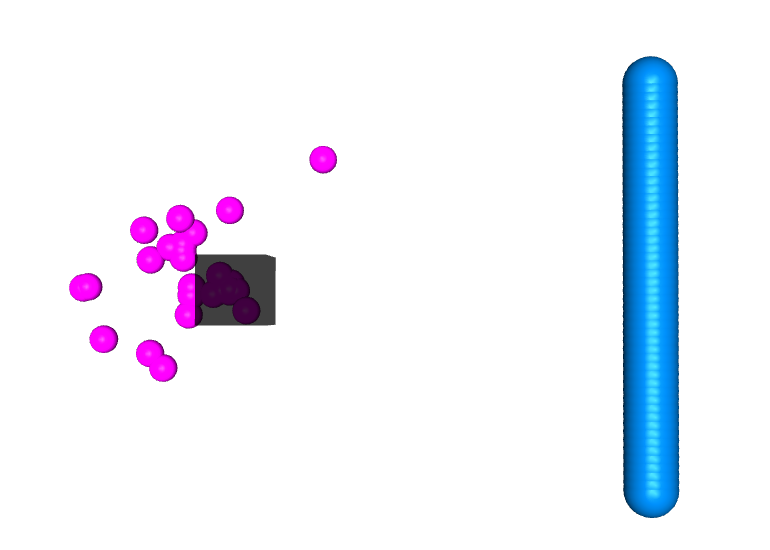}
            \caption{Episode 500}
        \end{subfigure}
        \begin{subfigure}[b]{0.19\textwidth}
            \includegraphics[width=\textwidth]{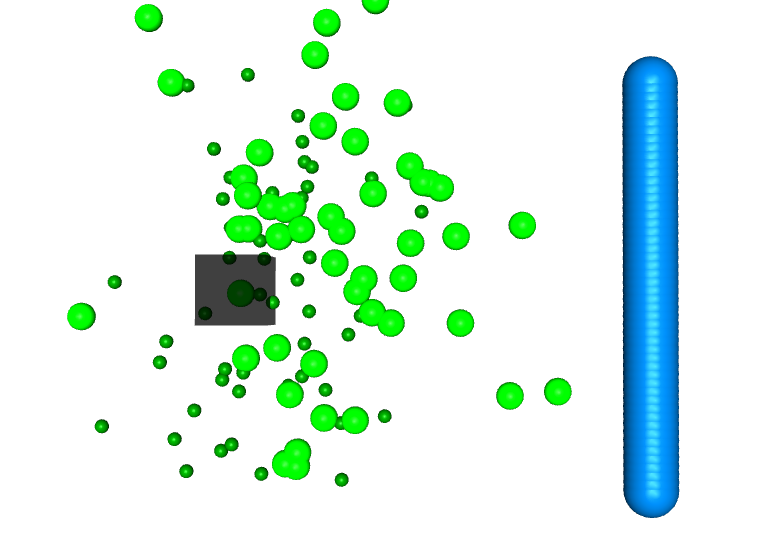}
            \includegraphics[width=\textwidth]{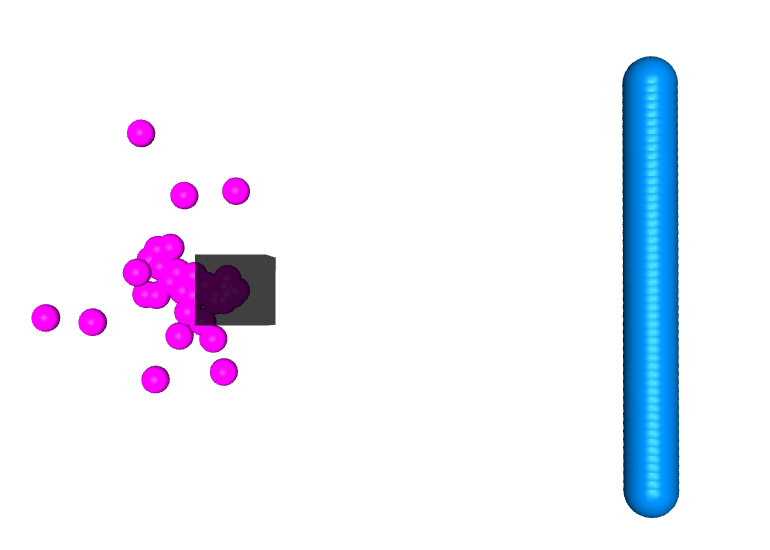}
            \caption{Episode 1000}
        \end{subfigure}
        \begin{subfigure}[b]{0.19\textwidth}
            \includegraphics[width=\textwidth]{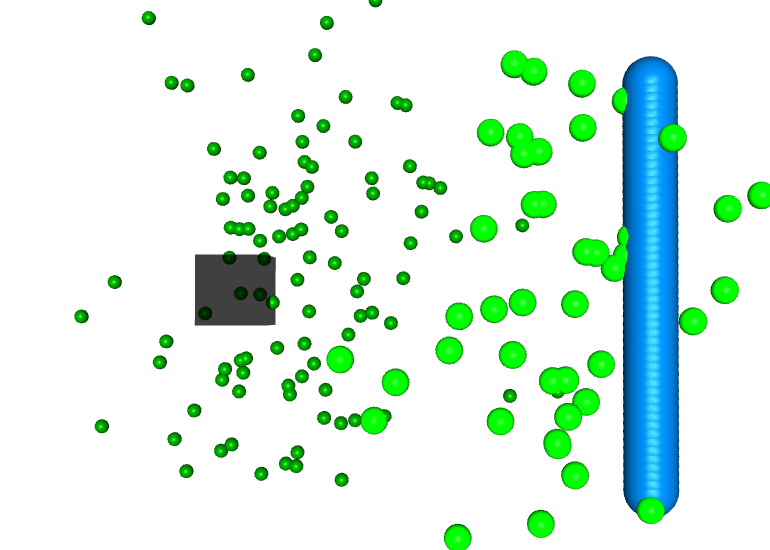}
            \includegraphics[width=\textwidth]{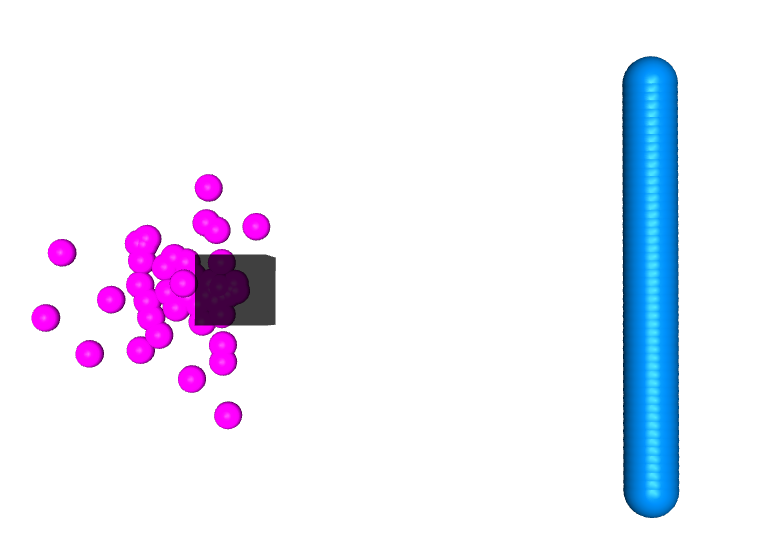}
            \caption{Episode 2000}
        \end{subfigure}
        \begin{subfigure}[b]{0.19\textwidth}
            \includegraphics[width=\textwidth]{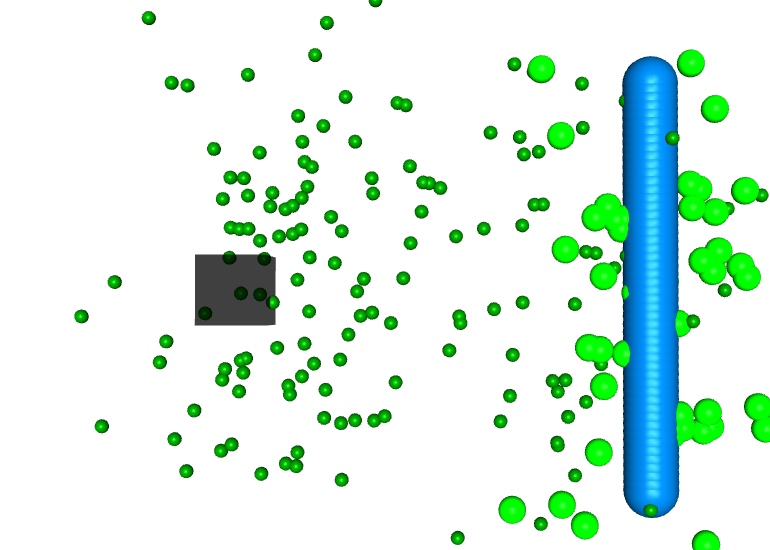}
            \includegraphics[width=\textwidth]{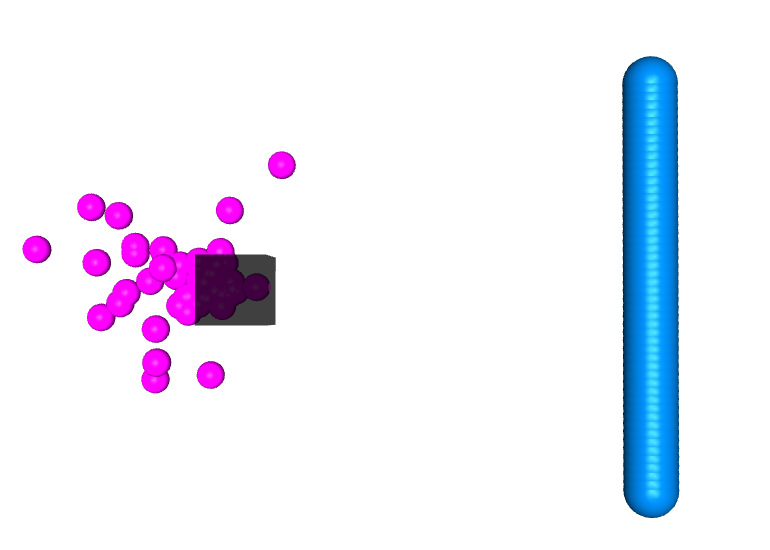}
            \caption{Episode 3000}
        \end{subfigure}
        \caption{Visualization of goal distribution generated by HGG and HER on FetchPush. The initial object position is shown as a black box. The blue segment indicates target goal distribution. The above row presents the distribution of the hindsight goals generated by our HGG  method, where bright green particles is a batch of recently generated goals, and dark green particles present the goals generated in the previous iterations. The bottom row presents the distribution of replay goals generated by HER.}
        \label{visualization}
    \end{figure}
    We first check whether HGG is able to generate meaningful hindsight goals for exploration. We compare HGG and HER in the FetchPush environment. 
    It is shown in Figure \ref{visualization} that HGG algorithm generates goals that gradually move towards the target region. Since those goals are hindsight, they are considered to be achieved during training. In comparison, the replay distribution of a DDPG+HER agent has been stuck around the initial position for many iterations, indicating that those goals may not be able to efficiently guide exploration.

\textbf{Performance on benchmark robotics tasks}
    \begin{figure}[h]
        \centering
        \includegraphics[width=0.95\textwidth]{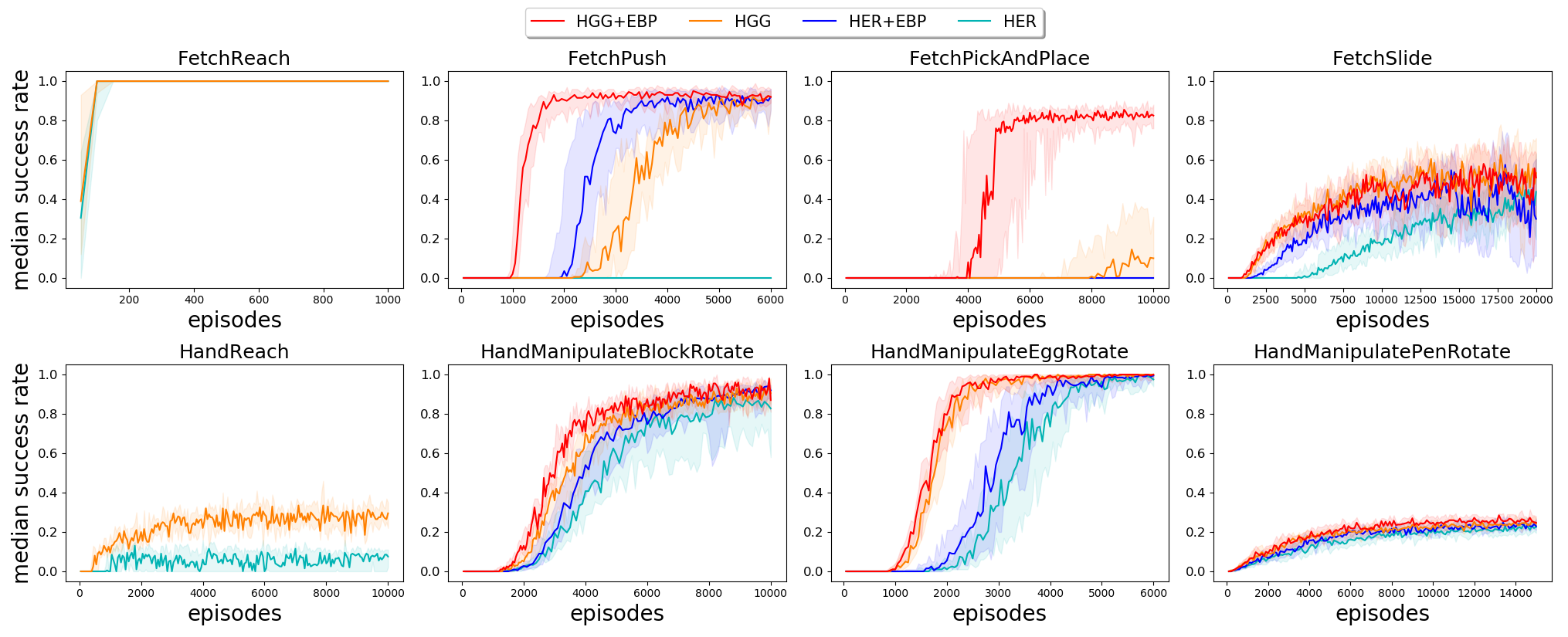}
        \caption{Learning curves for variant a number of goal-oriented robotic manipulation tasks. All curves presented in this figure are trained with default hyper-parameters included in Appendix \ref{hyper_parameters}. Note that since FetchReach and HandReach do not contain object instances for EBP, so we do not include the +EBP versions for them.}
        \label{main_results}
    \end{figure}
    
Then we check whether the exploration provided by the goals generated by HGG can result in better policy training performance. As shown in Figure \ref{main_results}, we compare the vanilla HER, HER with Energy-Based Prioritization (HER+EBP), HGG, HGG+EBP. It is worth noting that since EBP is designed for the Bellman equation updates, it is complementary to our HGG-based exploration approach. Among the eight environments, HGG substantially outperforms HER on four and has comparable performance on the other four, which are either too simple or too difficult. When combined with EBP, HGG+EBP achieves the best performance on six environments that are eligible. 

\begin{wrapfigure}[10]{R}{7.0cm}
    \centering
    \vspace{-0.18in}
    \begin{minipage}{0.23\textwidth}
        \centering\vspace{-0.1in}
        \includegraphics[height=2.4cm]{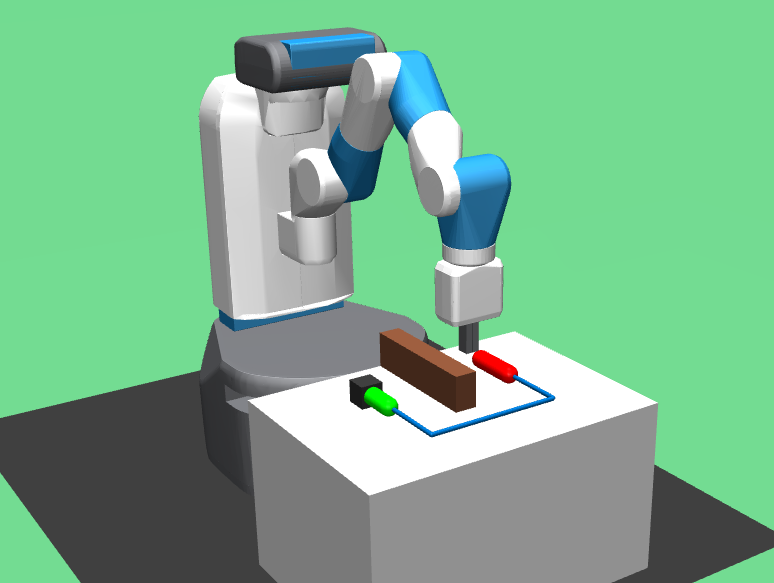}
    \end{minipage}
    \begin{minipage}{0.25\textwidth}
        \centering
        \includegraphics[height=2.8cm]{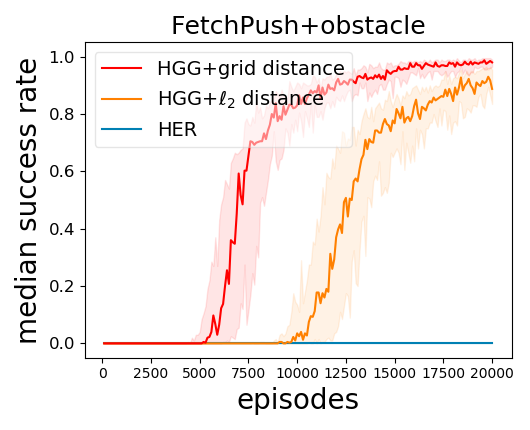}
    \end{minipage}
    \caption{Visualization of FetchPush with obstacle.}
    \label{obstacle_task}
\end{wrapfigure}
\textbf{Performance on tasks with obstacle} In a more difficult task, crafted metric may be more suitable than $\ell_2$-distance used in Eq.~\eqref{state_abstraction}. As shown in Figure \ref{obstacle_task}, we created an environment based on FetchPush with a rigid obstacle. The object and the goal are uniformly generated in the green and the red segments respectively. The brown block is a static wall which cannot be moved. In addition to $\ell_2$, we also construct a distance metric based on the graph distance of a mesh grid on the plane, the blue line is a successful trajectory in such hand-craft distance measure. A more detailed description is deferred to Appendix \ref{grid_metric_details}. Intuitively speaking, this crafted distance should be better than $\ell_2$ due to the existence of the obstacle. Experimental results suggest that such a crafted distance metric provides better guidance for goal generation and training, and significantly improves sample efficiency over $\ell_2$ distance. It would be a future direction to investigate ways to obtain or learn a good metric.

\subsection{Comparison with Explicit Curriculum Learning}\label{compare_curriculum}
    \begin{wrapfigure}[13]{R}{5cm}
        \centering
        \vspace*{-0.3in}
        \includegraphics[height=3.5cm]{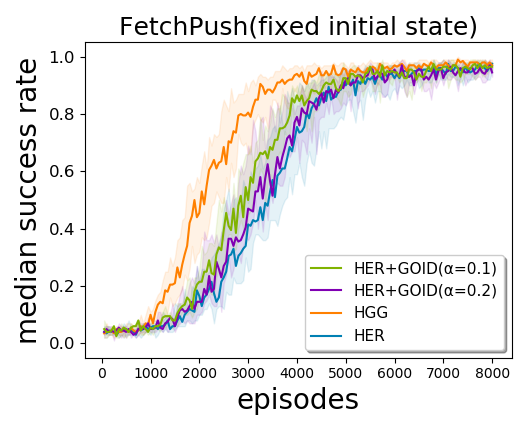}
        \caption{Comparison with curriculum learning. We compare HGG with the original HER, HER+GOID with two threshold values.}
        \label{rejection_sampling}
    \end{wrapfigure}
    
    Since our method can be seen as an explicit curriculum learning for exploration, where we generate hindsight goals as intermediate task distribution, we also compare our method with another recently proposed curriculum learning method for RL. \citet{florensa2018automatic} leverages Least-Squares GAN \citep{mao2018effectiveness} to mimic the set called Goals of Intermediate Difficult as exploration goal generator.
    
    Specifically, in our task settings, we define a goal set
    $
        GOID(\pi) = \{g:\alpha\leq f(\pi,g) \leq 1-\alpha\},
    $
    where $f(\pi,g)$ represents the average success rate in a small region closed by goal $g$. 
    To sample from $GOID$, we implement an oracle goal generator based on rejection sampling, which could uniformly sample goals from $GOID(\pi)$. Result in Figure \ref{rejection_sampling} indicates that our Hindsight Goal Generation substantially outperforms HER even with $GOID$ from the oracle generator. Note that this experiment is run on a environment with fixed initial state due to the limitation of \citet{florensa2018automatic}. The choice of $\alpha$ is also suggested by \citet{florensa2018automatic}.
    

\subsection{Ablation Studies on Hyperparameter Selection}
\label{sec:abliation}

    In this section, we set up a set of ablation tests on several hyper-parameters used in the Hindsight Goal Generation algorithm.
    
    \textbf{Lipschitz $L$:} The selection of Lipschitz constant is task dependent, since it iss related with scale of value function and goal distance. For the robotics tasks tested in this paper, we find that it is easier to set $L$ by first divided it with the upper bound of the distance between any two final goals in a environment. We test a few choices of $L$ on several environments and find that it is very easy to find a range of $L$ that works well and shows robustness for all the environments tested in this section. We show the learning curves on FetchPush with different $L$. It appears that the performance of HGG is reasonable as long as $L$ is not too small. For all tasks we tested in the comparisons, we set $L=5.0$.
    
    \textbf{Distance weight $c$:} Parameter $c$ defines the trade-off between the initial state similarity and the goal similarity.  Larger $c$ encourages our algorithm to choose hindsight goals that has closer initial state. Results in Figure \ref{ablation_study} indicates that the choice of $c$ is indeed robust. For all tasks we tested in the comparisons, we set $c=3.0$.
    
    
    \textbf{Number of hindsight goals $K$:} We find that for the simple tasks, the choice of $K$ is not critical. Even a greedy approach (corresponds to $K=1$) can achieved competitive performance, e.g. on FetchPush in the third panel of Figure \ref{ablation_study}. For more difficult environment, such as FetchPickAndPlace, larger batch size can significantly reduce the variance of training results. For all tasks tested in the comparisons, we ploted the best results given by $K\in\{50,100\}$.
    \begin{figure}[htp]
        \centering
        \includegraphics[width=0.95\textwidth]{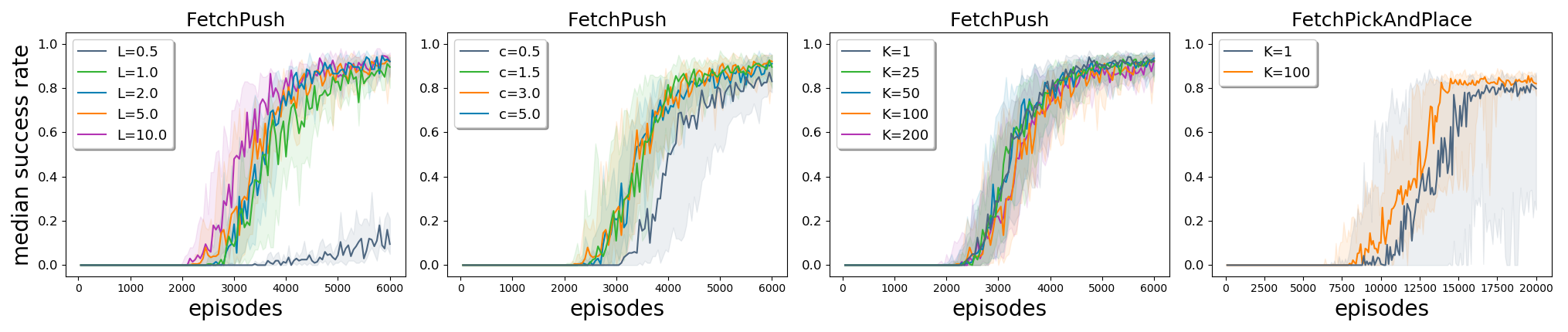}
        \caption{Ablation study of hyper-parameter selection. Several curves are omitted in the forth panel to provide a clear view of variance comparison. A full version is deferred to Appendix \ref{appendix_ablation}.}
        \label{ablation_study}
    \end{figure}
\section{Conclusion}

We present a novel automatic hindsight goal generation algorithm, by which valuable hindsight imaginary tasks are generated to enable efficient exploration for goal-oriented off-policy reinforcement learning. We formulate this idea as a surrogate optimization to identify hindsight goals that are easy to achieve and also likely to lead to the actual goal. We introduce a combinatorial solver to generate such intermediate tasks. Extensive experiments demonstrated better goal-oriented exploration of our method over original HER and curriculum learning on a collection of robotic learning tasks. A future direction is to incorporate the controllable representation learning \citep{thomas2017independently} to provide task-specific distance metric \citep{ghosh2018learning, srinivas2018universal}, which may generalize our method to more complicated cases where the standard Wasserstein distance cannot be applied directly.

\bibliographystyle{plainnat}
\bibliography{ref}

\clearpage
\appendix
\section{Proof of Theorem 1}
\label{app:proof}
In this section we provide the proof of Theorem 1.
\thmbound*

    \begin{proof}
    By Eq.~\eqref{discrepancy_bound}, for any quadruple $(s, g, s', g')$, we have
    \begin{align}
    V^{\pi}(s', g') \geq V^{\pi}(s, g) - L \cdot d((s, g), (s', g')). \label{eq:proof-theorem-1}
    \end{align}
    For any $\mu \in\Gamma(\mathcal{T},\mathcal{T}')$, we sample $(s, g, s', g') \sim \mu$ and take the expectation on both sides of Eq.~\eqref{eq:proof-theorem-1}, and get
    \begin{align}
        V^{\pi}(\mathcal{T}') \geq V^{\pi}(\mathcal{T}) - L\cdot \mathbb{E}_\mu [d((s, g), (s', g'))] . \label{eq:proof-theorem-2}
    \end{align}
    Since Eq.~\eqref{eq:proof-theorem-2} holds for any $\mu \in \Gamma(\mathcal{T}, \mathcal{T'})$, we have
    \[
      V^{\pi}(\mathcal{T}') \geq V^{\pi}(\mathcal{T}) - L\cdot \inf_{\mu \in \Gamma(\mathcal{T}, \mathcal{T}')} \left(\mathbb{E}_\mu [d((s, g), (s', g'))] \right) = V^{\pi}(\mathcal{T}) - L\cdot D(\mathcal{T}, \mathcal{T'}) .
    \]
    \end{proof}

\section{Experiment Settings}\label{experiment_settings}

\subsection{Modified Environments}\label{modified_env}
    \begin{figure}[h]
        \centering
        \includegraphics[width=0.30\textwidth]{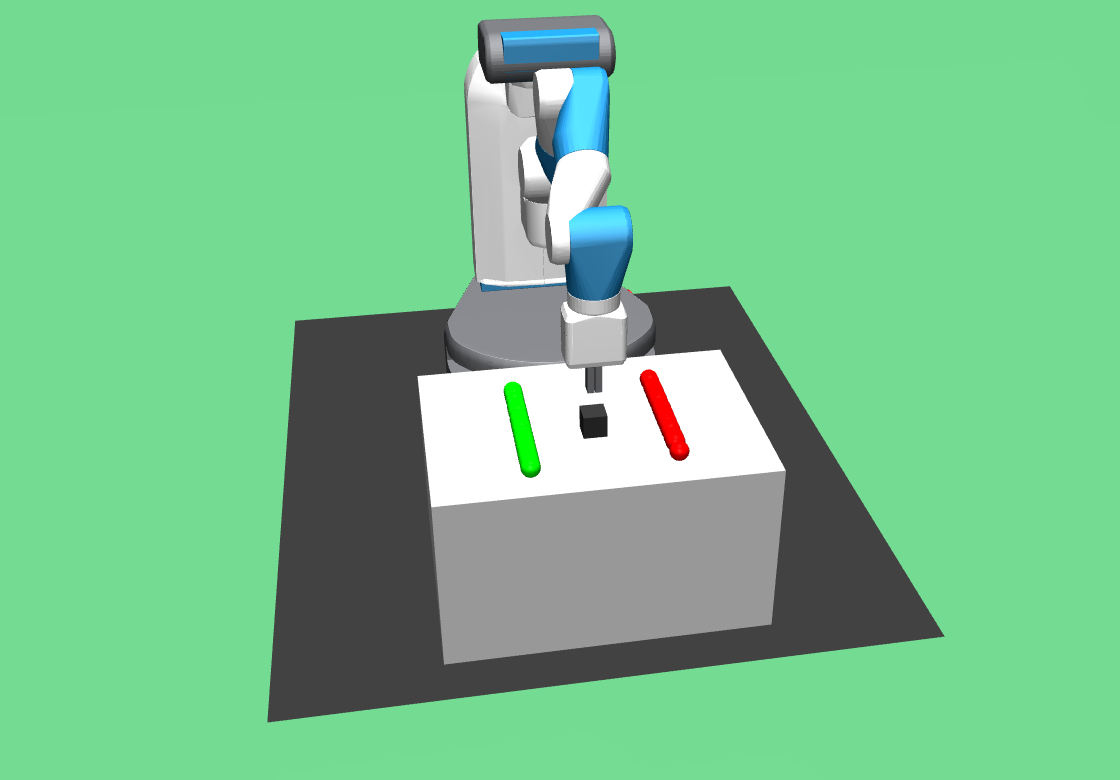}
        \includegraphics[width=0.30\textwidth]{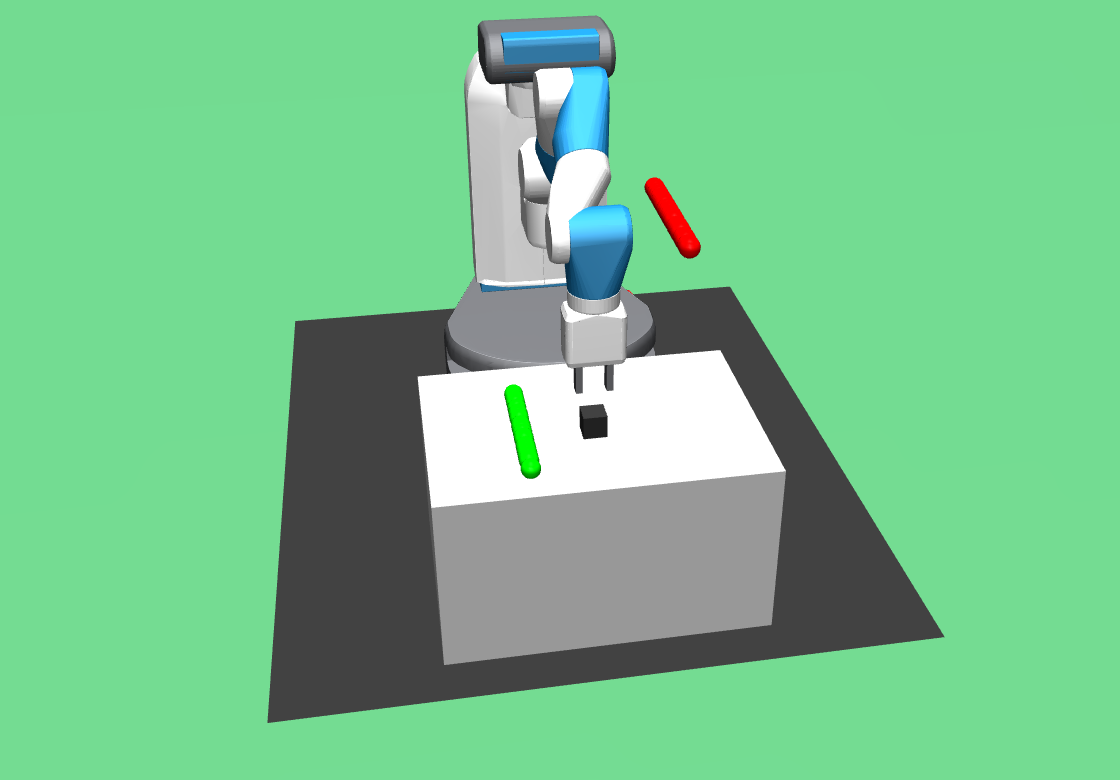}
        \includegraphics[width=0.30\textwidth]{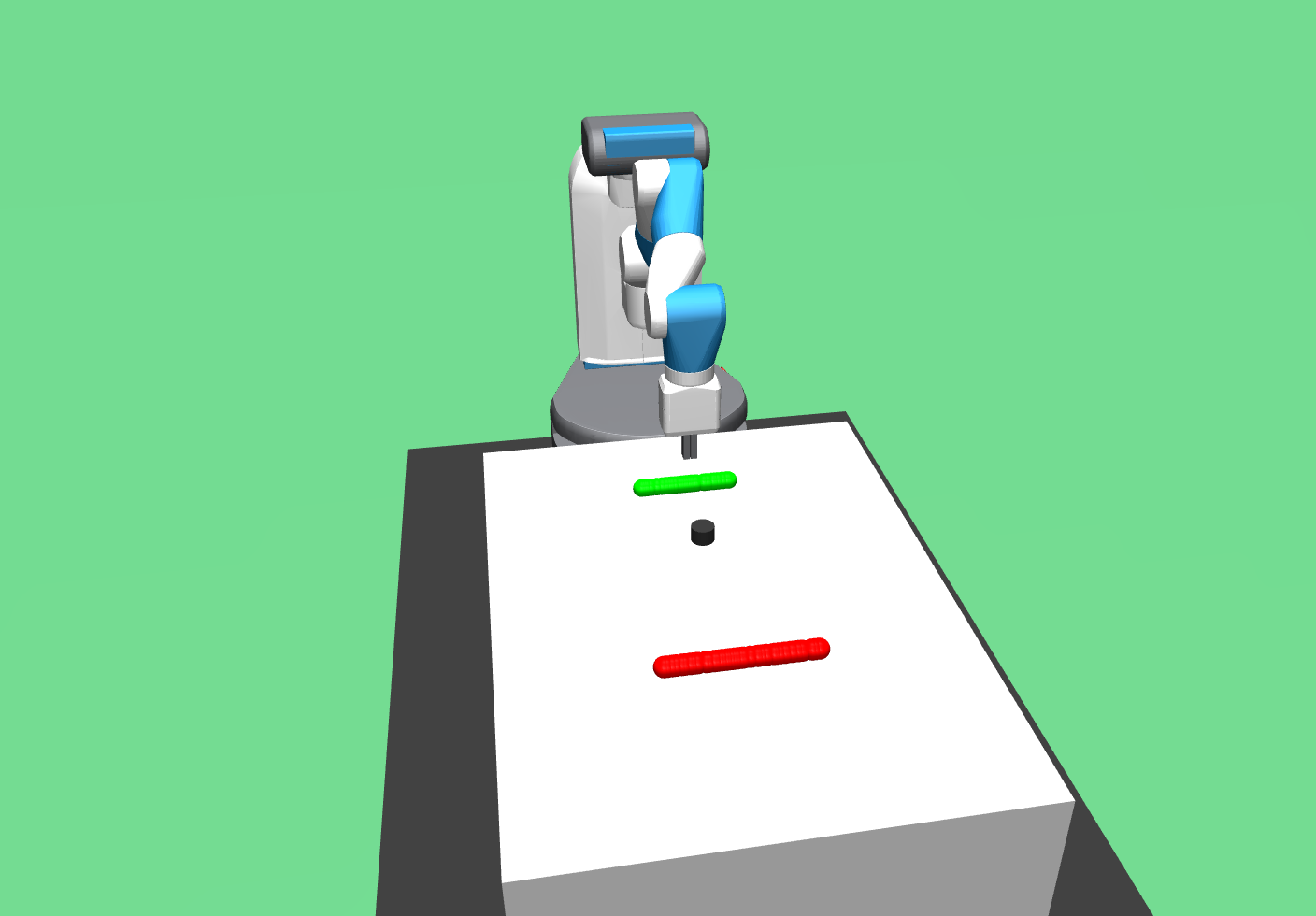}
        \caption{Visualization of modified task distribution in Fetch environments. The object is uniformly generated on the green segment, and the goal is uniformly generated on the red segment.}
    \end{figure}
    
    Fetch Environments:  
    \begin{itemize}
        \item FetchPush-v1: Let the origin $(0,0,0)$ denote the projection of gripper's initial coordinate on the table. The object is uniformly generated on the segment $(-0.15,-0.15,0)-(0.15,-0.15,0)$, and the goal is uniformly generated on the segment $(-0.15,0.15,0)-(0.15,0.15,0)$.
        \item FetchPickAndPlace-v1: Let the origin $(0,0,0)$ denote the projection of gripper's initial coordinate on the table. The object is uniformly generated on the segment $(-0.15,-0.15,0)-(0.15,-0.15,0)$, and the goal is uniformly generated on the segment $(-0.15,0.15,0.45)-(0.15,0.15,0.45)$.
        \item FetchSlide-v1: Let the origin $(0,0,0)$ denote the projection of gripper's initial coordinate on the table. The object is uniformly generated on the segment $(-0.05,-0.1,0)-(-0.05,0.1,0)$, and the goal is uniformly generated on the segment $(0.55,-0.15,0)-(0.55,0.15,0)$.
    \end{itemize}
    
    Hand Environments: 
    \begin{itemize}
        \item HandManipulateBlockRotate-v0, HandManipulateEggRotate-v0: Let $s_0$ be the default initial state defined in original simulator \citep{plappert2018multi}. The initial pose is generated by applying a rotation around $z$-axis, where the rotation degree will be uniformly sampled from $[-\pi/4,\pi/4]$. The goal is also rotated from $s_0$ around $z$-axis, where the degree is uniformly sampled from $[\pi-\pi/4,\pi+\pi/4]$.
        \item HandManipulatePenRotate-v0: We use the same setting as the original simulator.
    \end{itemize}
    
    Reach Environments:
    \begin{itemize}
        \item FetchReach-v1: Let the origin $(0,0,0)$ denote the coordinate of gripper's initial position. Goal is uniformly generated on the segment $(-0.15,0.15,0.15)-(0.15,0.15,0.15)$.
        \item HandReach-v0: Uniformly select one dimension of \emph{meeting point} and add an offset of 0.005, where \emph{meeting point} is defined in original simulator  \citep{plappert2018multi}
    \end{itemize}
    
    Other attributes of the environment (such as horizon $H$, reward function $R_g$) are kept the same as default.

\subsection{Evaluation Details}
    \begin{itemize}
        \item All curves presented in this paper are plotted from 10 runs with random task initializations and seeds.
        \item Shaded region indicates 60\% population around median.
        \item All curves are plotted using the same hyper-parameters (except ablation section).
        \item Following \citet{andrychowicz2017hindsight}, an episode is considered successful if $\|\phi(s_H)-g\|_2\leq \delta_g$ is achieved, where $\phi(s_H)$ is the object position at the end of the episode. $\delta_g$ is the same threshold using in reward function \eqref{reward_setting}.
    \end{itemize}

\subsection{Details of Experiment with obstacle} \label{grid_metric_details}
    Using the same coordinate system as Appendix \ref{modified_env}. Let the origin $(0,0,0)$ denote the projection of gripper's initial coordinate on the table. The object is uniformly generated on the segment $(-0.15,-0.15,0)-(-0.045,-0.15,0)$, and the goal is uniformly generated on the segment $(-0.15,0.15,0)-(-0.045,0.15,0)$. The wall lies on $(-0.3,0,0)-(0,0,0)$.
    
    The crafted distance used in Figure \ref{obstacle_task} is calculated by the following rules.
    \begin{itemize}
        \item The distance metric between two initial states is kept as before.
        \item The distance between the hindsight goal $g$ and the desired goal $g^*$ is evaluated as the summation of two parts. The first part is the $\ell_2$ distance between the goal $g$ and its closest point $g'$ on the blue polygonal line shown in Figure \ref{obstacle_task}. The second part the distance between $g'$ and $g^*$ along the blue line.
        \item The above two terms are comined with the same ratio used in Eq.~\eqref{state_abstraction}.
    \end{itemize}

\subsection{Details of Experiment \ref{compare_curriculum}}
    \begin{figure}[h]
        \centering
        \includegraphics[width=0.45\textwidth]{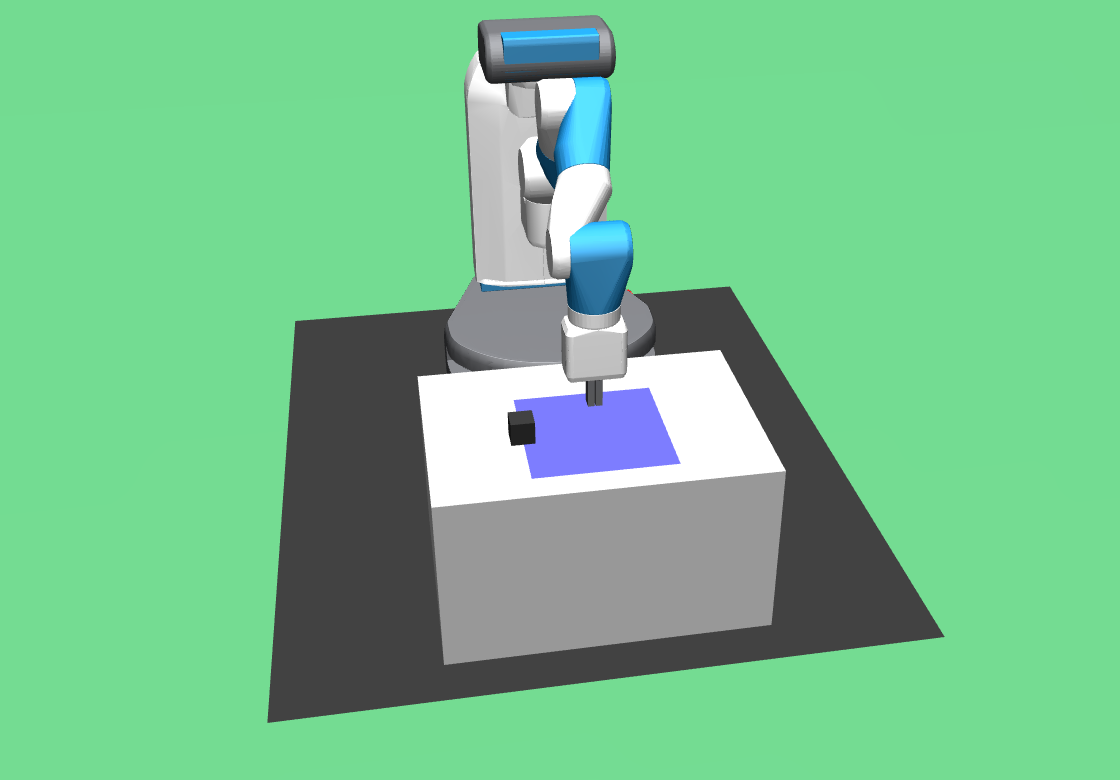}
        \caption{Visualization of modified task distribution in Experiment \ref{compare_curriculum}. The initial position of the object is as shown in this figure, and the goal is uniformly generated in the blue region.}
    \end{figure}
    
    \begin{itemize}
        \item Since the environment is deterministic, the success rate $f(\pi,g)$ is defines as
        $$f(\pi,g)=\int_{g'\in \mathcal{B}(g,\delta_g)}\mathbbm{1}[\pi \text{ achieves success for the goal } g']\;d g',$$ where $\mathcal{B}(g,\delta_g)$ indicates a ball with radius $\delta_g$, centered at $g$. And $\delta_g$ is the same threshold using in reward function \eqref{reward_setting} and success testing.
        \item 
        The average success rate oracle $f(\pi,g)$ is estimated by $10^2$ samples.
    \end{itemize}

\section{Implementation Details}
\subsection{Hyper-Parameters}\label{hyper_parameters}
    Almost all hyper-parameters using DDPG and HER are kept the same as benchmark results, only following terms differ with \citet{plappert2018multi}:
    \begin{itemize}
        \item number of MPI workers: 1;
        \item buffer size: $10^4$ trajectories.
    \end{itemize}
    
    Other hyper-parameters:
    \begin{itemize}
        \item Actor and critic networks: 3 layers with 256 units and ReLU activation;
        \item Adam optimizer with $10^{-3}$ learning rate;
        \item Polyak-averaging coefficient: 0.95;
        \item Action $L_2$-norm penalty coefficient: 1.0;
        \item Batch size: 256;
        \item Probability of random actions: 0.3;
        \item Scale of additive Gaussian noise: 0.2;
        \item Probability of HER experience replay: 0.8;
        \item Number of batches to replay after collecting one trajectory: 20.
    \end{itemize}
    
    Hyper-parameters in weighted bipartite matching:
    \begin{itemize}
        \item Lipschitz constant $L$: 5.0;
        \item Distance weight $c$: 3.0;
        \item Number of hindsight goals $K$: 50 or 100.
    \end{itemize}
    
\subsection{Details on Data Processing}
\begin{itemize}
    \item In policy training of HGG, we sample minibatches using HER.
    \item As a normalization step, we use Lipschitz constant $L^*=\frac{L}{(1-\gamma)d^{max}}$ in back-end computation, where $d^{max}$ is the $\ell_2$-diameter of the goal space $\mathcal{G}$, and $L$ corresponds to the amount discussed in ablation study.
    \item To reduce computational cost of bipartite matching, we approximate the buffer set by a First-In-First-Out queue containing $10^3$ recent trajectories.
    \item An additional Gaussian noise $\mathcal{N}(0,0.05I)$ is added to goals generated by HGG in Fetch environments. We don't add this term in Hand environments because the goal space is not $\mathbb{R}^d$.
\end{itemize}

\section{Additional Experiment Results} \label{appendix_experiment}

\subsection{Additional Visualization of Hindsight Goals Generated by HGG}

\begin{figure}[h]
    \centering
    \begin{subfigure}{0.30\textwidth}
        \begin{minipage}{0.98\textwidth}
            \centering
            \includegraphics[height=3.1cm]{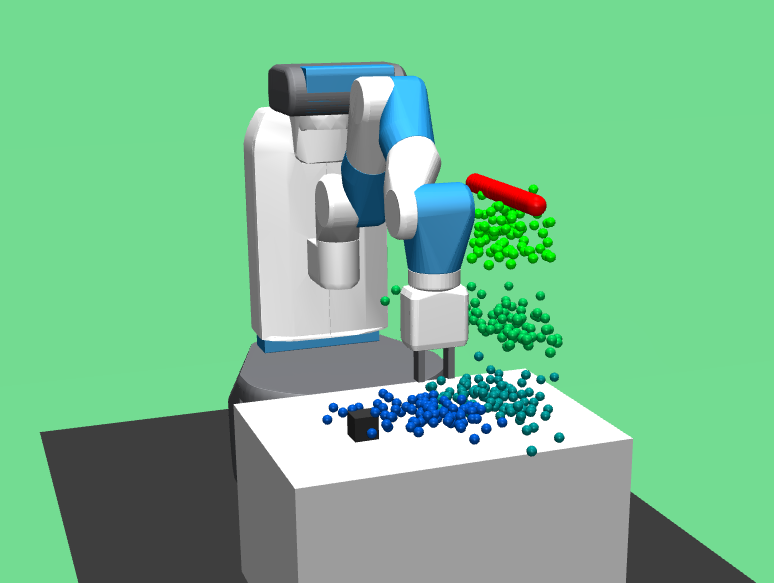}
            \vspace{0.00in}\caption{}
            \label{additional_visualization_1}
        \end{minipage}
    \end{subfigure}
    \begin{subfigure}{0.33\textwidth}
        \begin{minipage}{0.98\textwidth}
            \centering
            \includegraphics[height=3.5cm]{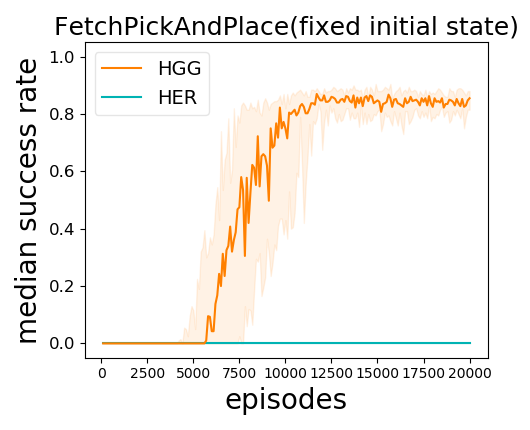}
            \caption{}
            \label{additional_visualization_2}
        \end{minipage}
    \end{subfigure}
    \begin{subfigure}{0.33\textwidth}
        \begin{minipage}{0.98\textwidth}
            \centering
            \includegraphics[height=3.5cm]{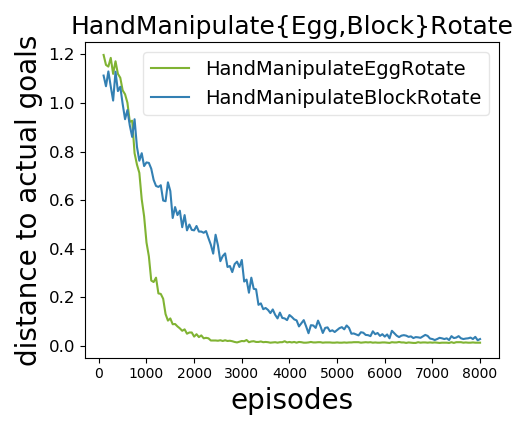}
            \caption{}
            \label{additional_visualization_3}
        \end{minipage}
    \end{subfigure}
    \caption{Additional visualization to illustrate the hindsight goals generated by HGG.}
\end{figure}

To give better intuitive illustrations on our motivation, we provide an additional visualization of goal distribution generated by HGG on a complex manipulation task FetchPickAndPlace (Figures \ref{additional_visualization_1} and \ref{additional_visualization_2}). In Figure \ref{additional_visualization_1}, ``blue to green'' corresponds to the generated goals during training. HGG will guide the agent to understand the location of the object in the early stage, and move it to its nearby region. Then it will learn to move the object towards the easiest direction, i.e. pushing the object to the location underneath the actual goal, and finally pick it up. For those tasks which are hard to visualize, such as the HandManipultation tasks, we plotted the curves of distances between proposed exploratory goals and actually desired goals (Figure \ref{additional_visualization_3}), all experiment followed the similar learning dynamics.

\subsection{Evaluation on Standard Tasks}
    In this section, we provide experiment results on standard Fetch tasks. The learning are shown in Figure~\ref{fig:appd2}.
    \begin{figure}[h]
        \centering
        \includegraphics[width=0.99\textwidth]{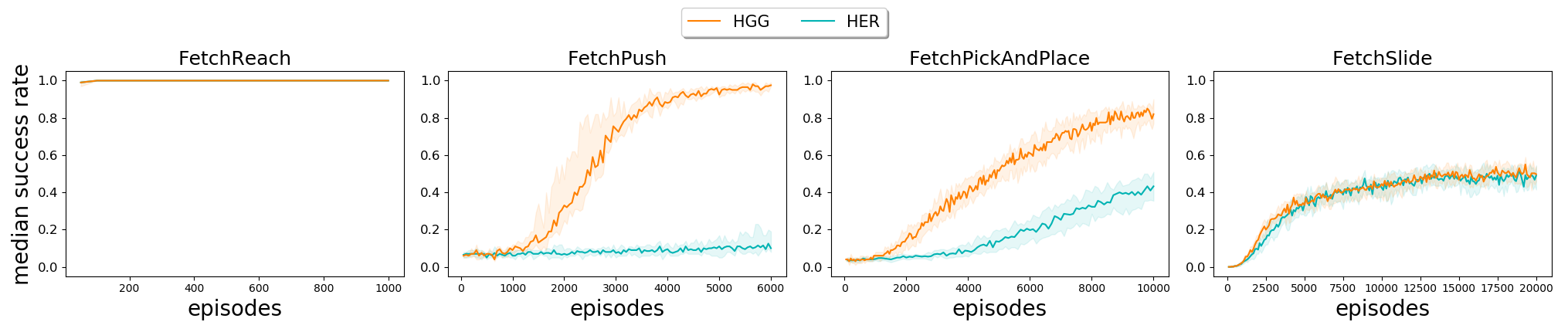}
        \caption{Learning curves for HGG and HER in standard task distribution created by \citet{andrychowicz2017hindsight}.}
        \label{fig:appd2}
    \end{figure}

\subsection{Additional Experiment Results on Section \ref{compare_curriculum}}
We provide the comparison of the performance of HGG and explicit curriculum learning on FetchPickAndPlace environment (see Figure \ref{fig:appd3}), showing that the result given in Section \ref{compare_curriculum} generalizes to a different environment.
    \begin{figure}[H]
        \centering
        \begin{minipage}{0.38\textwidth}
            \centering\vspace{-0.15in}
            \includegraphics[height=3.5cm]{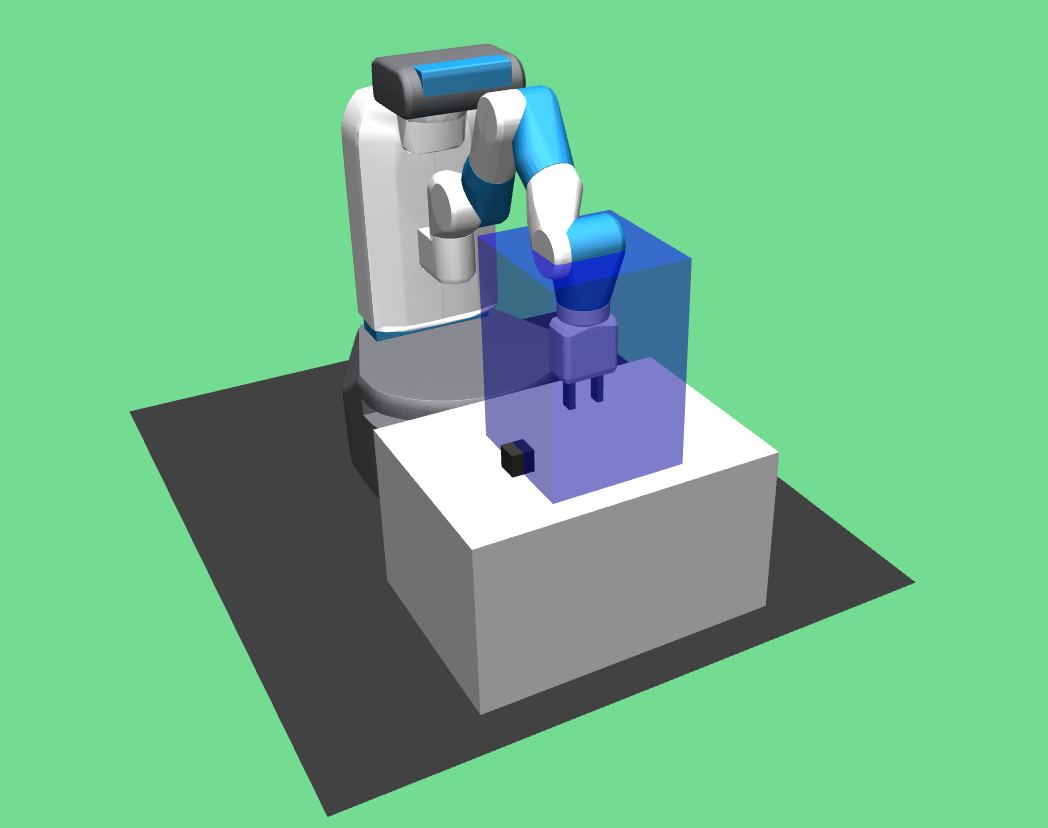}
        \end{minipage}
        \begin{minipage}{0.38\textwidth}
            \centering
            \includegraphics[height=4cm]{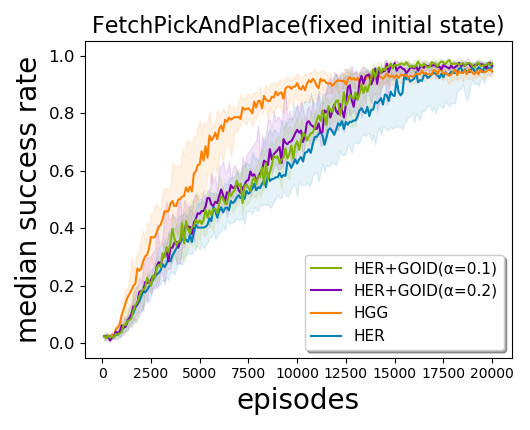}
        \end{minipage}
        \caption{Comparison with explicit curriculum learning in FetchPickAndPlace. The initial position of the object is as shown in the left figure, and the goal is generated in the blue region following the default distribution created by \citet{andrychowicz2017hindsight}.}
        \label{fig:appd3}
    \end{figure}

\subsection{Ablation Study}\label{appendix_ablation}
We provide full experiments on ablation study in Figure~\ref{fig:appd4}.
    \begin{figure}[H]
        \centering
        \includegraphics[width=0.95\textwidth]{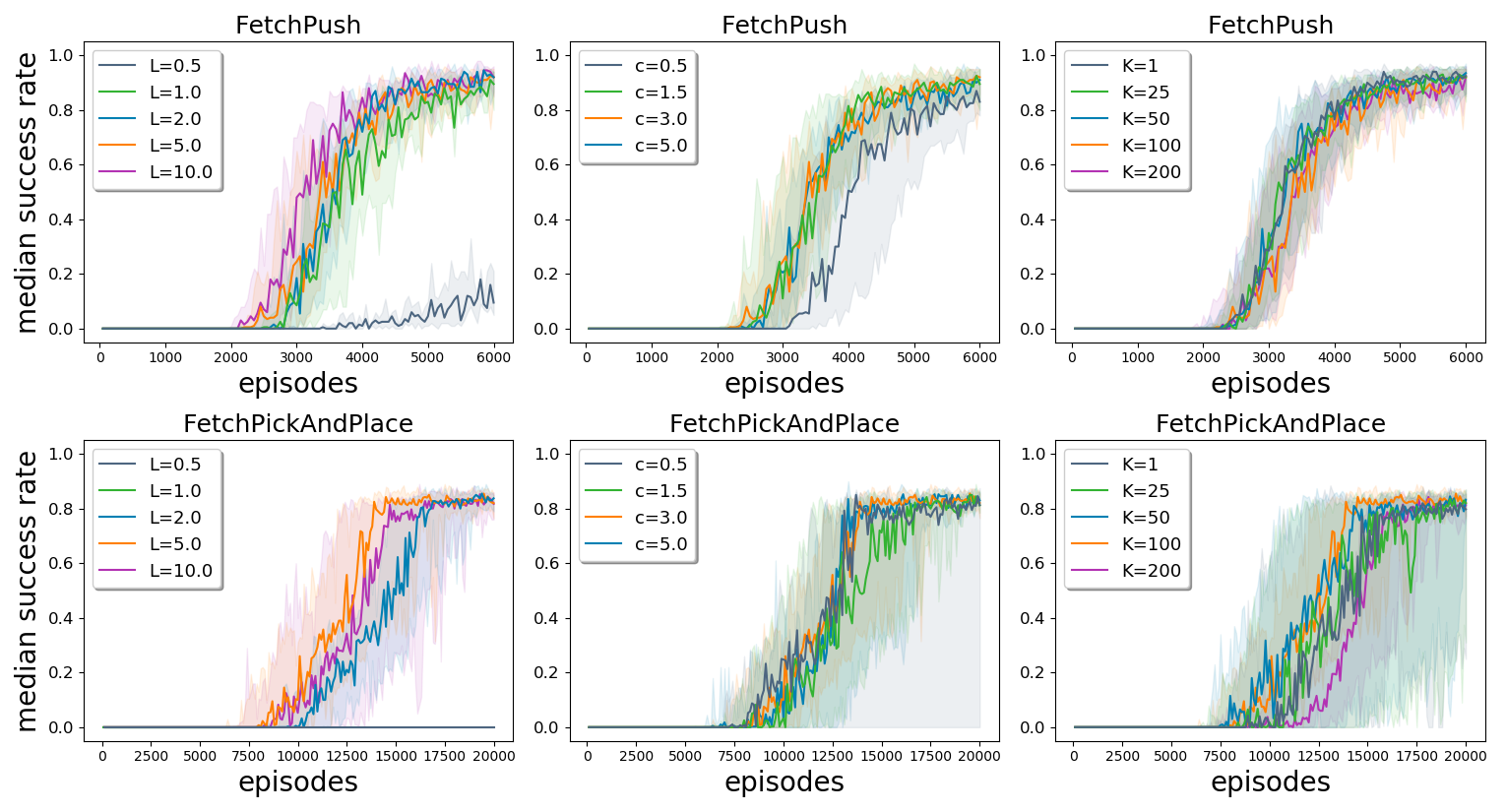}
        \caption{A full version of ablation study.}
        \label{fig:appd4}
    \end{figure}

\end{document}